\pdfoutput=1
\documentclass[letterpaper, 10 pt, conference]{ieeeconf}
\IEEEoverridecommandlockouts
\usepackage{amsmath,amssymb}

\usepackage{amsthm}
\usepackage{graphicx}
\usepackage{booktabs}
\usepackage{multirow}
\usepackage{url}
\usepackage[hidelinks]{hyperref}
\hypersetup{pdftitle={Median Temporal Ensembling: Training-Free Robust Aggregation for Action-Chunked Visuomotor Policies}, pdfauthor={Yuhang Jiang}}
\usepackage{xcolor}

\newtheorem{definition}{Definition}
\newtheorem{theorem}{Theorem}
\newtheorem{corollary}{Corollary}

\title{\LARGE \bf
Median Temporal Ensembling: Training-Free Robust Aggregation
for Action-Chunked Visuomotor Policies}

\author{Yuhang Jiang%
\thanks{Yuhang Jiang is with the University of Trento, Italy. {\tt\small jyhtjtj@gmail.com}. Project page: \url{https://avalon-s.github.io/MedianTE/}}%
}

\begin{document}
\maketitle
\thispagestyle{empty}
\pagestyle{empty}

\begin{abstract}
Action-chunked visuomotor policies predict overlapping trajectories, so every executed
action is covered by several predictions. %
Temporal ensembling smooths execution by combining
these predictions with an exponentially weighted mean. One corrupted prediction can move
the aggregate without bound: its breakdown point is $0$. We use adversarial corruption to
stress this deployed aggregator and to compare two kinds of guarantee. A \emph{metric}
guarantee bounds the response to a perturbation of a given size. A \emph{combinatorial}
guarantee instead bounds the damage when at most $q$ of the $M$ candidates covering a
timestep are corrupted, whatever their size. %
Encoder adversarial fine-tuning recovers $44\%$ of the loss
under the published patch attack, but only $7.3\%$ after the attacker's step size is
increased. By contrast, the coordinate-wise median of the same candidate set keeps its
recovered fraction flat as attack optimisation increases. %
Median temporal ensembling costs one line and requires no
retraining. Across 25 (configuration, corruption-level) combinations it is never worse than
the mean and is significantly better in 15. It also transfers to a second policy class, and
it recovers performance under a failure with no attacker in the loop at all: camera
frames that arrive blank.
Its effect on clean data is configuration-dependent, from $-0.04$ to $+0.07$. We also give
the boundary: corruption that shifts every covering prediction by the same amount is
invisible to this whole family of statistics, and no equivariant aggregator can remove it.
\end{abstract}

\section{Introduction}
\label{sec:intro}

Action-chunked visuomotor policies~\cite{act,dp} predict a short trajectory from each
observation but execute only its first few steps. Each control timestep is therefore
covered by overlapping chunks. A robot arm
reaching for a can re-plans repeatedly on the way, and at any moment several of those
plans still have something to say about where the gripper goes next.
Temporal ensembling, introduced with ACT~\cite{act}
and reused widely since, combines their predictions into a single command by taking an
exponentially weighted \emph{mean}.

A single corrupted prediction out of $M$ can drive that mean arbitrarily far, pushing the
arm against its limits rather than producing a bounded error. In the language of robust
statistics, the aggregator has \textbf{breakdown point $0$}. The vulnerability belongs to
the estimator, not to any particular adversary.
Recent work learns to select among the cached
predictions~\cite{tas}; the weighted mean itself has not, to our knowledge, been
analysed as an estimator.

To probe that vulnerability we need to control how many of the covering predictions are
corrupted, and an
adversarial stress test gives exactly that control. We organise the paper around \emph{what
kind of guarantee} a defence provides. A \textbf{metric} guarantee bounds the response to a
perturbation \emph{of a given size}; a \textbf{combinatorial} one bounds the damage when
\emph{at most $q$ of $M$} inputs are corrupted.

We show that this distinction predicts which defences
survive a stronger attack and which lose their effect, on the same policies and tasks
(Fig.~\ref{fig:main}). The \emph{temporal structure} of the corruption decides whether a combinatorial
guarantee exists at all: corruption that shifts every covering prediction by the same
amount produces no disagreement between candidates, leaving nothing for a rank statistic,
or for any consistency monitor, to work with.

\begin{figure}[tb]
\centering
\includegraphics[width=\columnwidth]{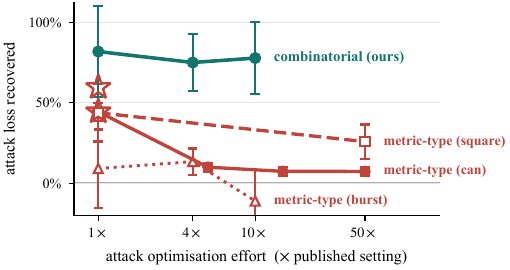}
\caption{Metric-type robustness degrades as attack optimisation increases;
combinatorial robustness persists. The vertical axis is the fraction of attack-induced
loss the defence recovers and the perturbation bound is fixed throughout. The horizontal
axis is optimisation effort relative to each attack's own baseline, measured in PGD
iterations for the two burst curves and in optimiser step size for the patch curves. Stars mark released-patch results;
error bars are $\pm1$ standard error, paired for the burst curves and binomial for the patch curves.}
\label{fig:main}
\end{figure}

\subsection{Contributions}

\textbf{Action chunking already carries the redundancy that robust aggregation needs}
(Section~\ref{sec:method}). Overlapping chunks give several estimates of the same
executed action, and temporal ensembling spends that redundancy on smoothing. Taking the coordinate-wise median of the same
candidate set instead costs one line. Over 25 (configuration, corruption-level)
combinations spanning two environments, three tasks, two backbones and four execution
strides, the median is never worse than the mean, and significantly better in 15 after
Holm correction. Our largest effect, \texttt{can} at $h{=}2$, $q{=}3$ over $200$ paired
episodes, takes success from $0.335$ to $0.675$. The same one-line change also recovers
performance under a failure with no attacker behind it: blank camera frames take
\texttt{can} from $1.00$ to $0.510$, and the median returns it to $0.690$. It carries to a second policy class as
well, where an ACT policy at $q{=}2$ goes from $0.010$ to $0.595$ and the mean wins no
episode.

\textbf{A direct measurement of why it works} (Section~\ref{sec:method}). Splitting the
candidates at every timestep by whether their query was corrupted, the median leaves the
range of the clean candidates exactly zero times in the $53\,704$ timesteps at which
fewer than half the covering candidates were corrupted,
while the mean leaves it in $81$--$93\%$ of them.

\textbf{A metric-type defence, and the mechanism of its failure}
(Section~\ref{sec:metric}). Encoder adversarial fine-tuning, ported from~\cite{edpa} to a
diffusion policy, recovers 44\% of the loss caused by the published patch attack on
\texttt{can}, and 7.3\% once the attacker's step size is raised. A gradient probe shows this is
\emph{not} gradient masking: the encoder really did become less sensitive, which is
exactly why it fails against a scale-invariant optimiser.

\textbf{Where neither route helps} (Section~\ref{sec:neither}). We
formalise the consistency statistics used by recent runtime monitors as those invariant
under a \emph{common translation} of the candidate set, and observe that common-mode
corruption is exactly such a translation, invisible to every member of the family
regardless of magnitude. %
Empirically the predicted blind spot appears at the operating points these
methods prescribe: the monitors rank attacked episodes
well and still stay silent through a total task failure.

The benefit is configuration-dependent, and Section~\ref{sec:method} names the
configurations in which it vanishes. Against persistent common-mode corruption,
Theorem~\ref{thm:cm} shows that no aggregator in the family can help.

\section{Related Work}
\label{sec:related}

\textbf{Action chunking, temporal ensembling and the monitors built on them.}
ACT~\cite{act}, Diffusion Policy~\cite{dp} and Octo~\cite{octo} emit a horizon of future
actions per query and re-plan before it is consumed. ACT's temporal ensembling, defined in
Section~\ref{sec:method}, is the weighted mean our method replaces. %
It needs both halves of that
behaviour, overlapping predictions \emph{and} a rule that combines them, and not every
chunked policy provides both:
OpenVLA~\cite{openvla} emits one action per query, while $\pi_0$~\cite{pi0} re-plans after
$16$ or $25$ of its $50$ predicted actions and discards the unexecuted remainder, having
tried temporal ensembling and found that it hurt performance. Both are statements about clean
performance, which is also where chunking's implicit ensembling has been
analysed~\cite{chunkwhy}. TAS~\cite{tas} trains a selector over the same cached
predictions with reinforcement learning; our replacement needs no training.
Several recent systems instead score the mutual agreement of those overlapping chunks:
STAC/Sentinel~\cite{stac} with a distributional distance, ACC~\cite{vlafail} with a
chunk-to-chunk mean absolute error divided by an in-chunk range, TIDE~\cite{rewindil} with
the squared error against the temporally ensembled plan, and Falcon~\cite{falcon} with a
closely related quantity used as a \emph{reuse gate for inference acceleration} rather than
a failure detector. Section~\ref{sec:neither} shows all of these share one structural
property. We re-implement TIDE and ACC from their published equations and evaluate them
under two corruption structures with identical code and identical threshold rules.

\textbf{Attacks and defences on visuomotor policies.} Perturbing a policy's observations
degrades control directly~\cite{huang2017adversarial}, and physical adversarial
patches~\cite{brown2017patch,kurakin2016physical} carry that threat into the world:
DP-Attacker~\cite{dpattacker} attacks
Diffusion Policy and releases the trained patch artefacts we use as the published baseline,
which we find under-optimised (Section~\ref{sec:calib}). Xu et al.~\cite{edpa}, alongside
their EDPA patch attack on vision-language-action models, propose fine-tuning the
observation encoder for invariance to the perturbation; we port that recipe to a diffusion
policy and characterise where most of its benefit disappears
(Section~\ref{sec:metric}). A separate family guarantees robustness over \emph{patch position} by masking
and voting~\cite{patchcleanser,patchcure,certmask}: a combinatorial guarantee over image
regions instead of over time. We do not evaluate them: masking and voting multiplies
forward passes, which compounds with a diffusion policy's multi-step sampler. As far as we
know the cheaper variants have not been demonstrated on such a policy. We do measure the
\emph{premise} such pipelines rest on (Section~\ref{sec:neither}).

\textbf{Robust aggregation.} The coordinate-wise median and trimmed estimators are
standard robust location estimators~\cite{huber1964robust}; tolerating a bounded
number of arbitrarily corrupted inputs is the Byzantine
setting~\cite{lamport1982byzantine}. Yin et al.~\cite{yin2018byzantine} analyse the
coordinate-wise median itself as a Byzantine-robust aggregation rule for distributed
learning, alongside related rules such as Krum~\cite{blanchard2017byzantine},
Bulyan~\cite{elmhamdi2018hidden} and geometric-median aggregation~\cite{pillutla2022robust}. In all of them the $M$ estimates
come from workers instead of one policy's overlapping predictions.
Two things kept the connection out of view. Temporal ensembling entered as a smoothing
device, its exponential weights motivated by recency and not by estimator theory; and on
clean data the two aggregators are identical in five of our eight configurations and differ
by at most $0.07$ in the rest, so clean-data evaluation gives little signal that the
vulnerability exists.

\section{Overlapping Predictions}
\label{sec:theory}

\noindent\textbf{Setup.}
An action-chunked policy $\pi$ is queried every $h$ control steps ($T_a$ in~\cite{dp}). The $i$-th query occurs
at control time $t_i = i\,h$, takes the most recent $T_o$ observations, and emits a chunk
$A^{(i)} = \pi(o_i) \in \mathbb{R}^{T_p \times d}$. The first $T_o-1$ entries align with
observation steps already in the past, so the executable part of the chunk starts at index
$T_o-1$. Under receding-horizon execution only $h$ entries are executed per query, so any
absolute time $\tau$ is predicted by several queries. Write
\[
  C(\tau) = \{\, i : 0 \le \tau - t_i \le T_p - T_o \,\},
\]
\[
  a^{(i)}(\tau) := A^{(i)}_{\tau - t_i + T_o - 1}
\]
for the \emph{covering set} of $\tau$ and its \emph{candidates}.

\textbf{Coverage number.} $|C(\tau)|$ is \emph{not} constant: it alternates with the phase
of $\tau$ relative to the query grid,
between $M_{\min} = 1 + \lfloor (T_p - T_o - (h-1))/h \rfloor$ and
$M_{\max} = \lfloor (T_p - T_o)/h \rfloor + 1$.
For $T_p{=}16$, $T_o{=}2$, $h{=}2$ this gives $M \in \{7,8\}$, measured at 48.8\% and
48.2\% of timesteps respectively, with $M\le 6$ only at episode boundaries (3\%).
Concretely, queries then fire at even control times, so $\tau{=}100$ is covered by the
eight queries issued at $t = 86, 88, \dots, 100$, while $\tau{=}101$ is covered by seven:
the query at $t{=}86$ has run out of executable entries by then. What decides which of
the two a timestep gets is its phase against the query grid. We write
$q(\tau)$ for the number of \emph{corrupted} candidates covering $\tau$
(Section~\ref{sec:threat}).

\section{Threat Model and Attack Calibration}
\label{sec:threat}

\noindent %
We choose the threat model to expose the breakdown point of a deployed aggregator. It
covers two structural regimes. Under \textbf{sparse} corruption at most some of the $M$
covering candidates are corrupted; under \textbf{common-mode} corruption every covering
candidate is shifted by the same $b(\tau)$ (Section~\ref{sec:neither}).

\subsection{Sparse Query Corruption}
\label{sec:sparse}

The attacker, or fault source, may arbitrarily perturb the observation of selected queries
within an $\ell_\infty$ ball of radius $\epsilon$~\cite{madry2018towards}; the remaining
queries are clean.
Following standard adaptive-evaluation practice~\cite{tramer2020adaptive} we grant full
knowledge of the aggregator,
the stride $h$ and the schedule, and let the attacker choose \emph{which} queries to
corrupt. The objective it optimises, however, is the policy's own loss, so this attacker
is the same whichever aggregator is deployed; Section~\ref{sec:method} adds one that
targets the median directly. Our main schedule is a contiguous burst of $q$ consecutive queries out of every
period of $M_{\min}$, because a contiguous run maximises the corrupted fraction at the
worst-hit timesteps.

$q$ is a schedule parameter. A covering set spans seven or eight consecutive queries
while the schedule repeats every $M_{\min}{=}7$, so a timestep late in one period sees
the tail of that period's burst together with the head of the next, and the corrupted
candidate count $q(\tau)$ routinely \emph{exceeds} the nominal $q$. We measure it. The distinction matters because the guarantee of Section~\ref{sec:method} needs a
\emph{strict} inequality, $2q(\tau) < M(\tau)$: at a tie the median averages the two
central order statistics, one of which is already corrupted. Ties are not a corner case,
because $M$
alternates: at one and the same nominal $q$ the timesteps with the larger $M$ can tie while
those with the smaller $M$ cannot.

\subsection{Sensor Faults and Intermittent Access}

The breakdown point is
attacker-independent: it asks how much of the input can be arbitrarily corrupted
before the estimate becomes unbounded. Intermittent occlusion, specular glare, motion
blur, dropped frames and transient sensor faults produce that same pattern, and
robustness to such non-adversarial corruption is studied in its own
right~\cite{hendrycks2019benchmarking}. ``At most $q$ of the covering predictions are untrustworthy'' is an
assumption about the \emph{quality of the redundancy}.
Among these faults, blank frames are by far the most damaging, of the kind a driver that
zero-fills on timeout produces: at $q{=}2$ of every seven queries they take
\texttt{can} from $1.00$ to $0.510$, as damaging as the calibrated
attack at the same density, and Section~\ref{sec:method} evaluates the aggregators there.
A single stale frame, an intermittent occluder swept to $48$ of $84$ pixels, motion blur
and a camera frozen for the whole burst each cost the undefended policy at most $0.08$
in success rate, even with every view occluded.

Corrupting every query is demanding in its own right: per-query optimisation at
inference time requires white-box gradients and per-frame compute, so access to the
observation stream may itself be intermittent.

\noindent\textbf{Scope.} The threshold $2q(\tau) < M(\tau)$ governs whether the aggregator's
output stays inside the clean range. It does not tell us where task success collapses;
that happens at $q{=}4$ on \texttt{can} and $q{=}5$ on \texttt{square}. The argument
applies unchanged when the $M$ estimates come from several cameras or agents, which we do
not test.

\subsection{Calibrating the Attacks}
\label{sec:calib}

How robust a defence looks depends on how hard the attack against it was
optimised~\cite{carlini2019evaluating}, and the gap is large here
(Section~\ref{sec:metric}). We therefore calibrate the attack before evaluating anything
against it.

\noindent\textbf{The physical-patch attack.}
All arms use the same optimisation budget (11 epochs, same seed, same batch size) and
differ only in the step size and the update rule: the raw gradient, or its sign, as
standard PGD~\cite{madry2018towards} uses.

\begin{table}[tb]
\centering
\caption{Calibration of the physical-patch attack. Cells are the undefended policy's
success rate under the resulting patch, so lower means a stronger attack; $n{=}200$
throughout. Arm A is the released artefact, arm B the configuration stated
in~\cite{dpattacker}, and arm C raises only the step size.}
\label{tab:calib}
\begin{tabular}{llcc}
\toprule
arm & update rule & \texttt{can} & \texttt{square} \\
\midrule
A  & released artefact (raw gradient) & 0.175 & 0.680 \\
B  & as specified: sign-PGD, $\alpha{=}10^{-4}$ & 0.185 & 0.700 \\
C  & sign-PGD, $\alpha{=}5\!\times\!10^{-3}$ & \textbf{0.025} & \textbf{0.475} \\
\bottomrule
\end{tabular}
\end{table}

The published attack is under-optimised in both its released configuration and under the
hyper-parameters its paper states (Table~\ref{tab:calib}). Reproducing the paper's own update rule at the
paper's own step size (arm B) yields a patch indistinguishable in strength from the released
one (\texttt{can}: $z{=}-0.26$, $p{=}0.80$; \texttt{square}: $z{=}-0.43$, $p{=}0.67$).
But raising only the step size takes \texttt{can} from $0.185$ to $0.025$
($z{=}5.22$, $p{=}1.8\!\times\!10^{-7}$) and \texttt{square} from $0.700$ to $0.475$
($z{=}4.57$, $p{=}4.9\!\times\!10^{-6}$).

The attack saturates early. On \texttt{can}, $\alpha{=}5\!\times\!10^{-4}$ already brings
the undefended success rate to $0.030$, and a further tenfold increase moves it only to
$0.025$ (Table~\ref{tab:metric}). The published setting therefore sits a factor of five in step
size below the point where this attack stops improving.
A defence evaluated against the published configuration faces an attack well
short of its own optimum.

\noindent\textbf{The burst attack.}
For the sparse threat of Section~\ref{sec:sparse} the attacker runs projected gradient descent (PGD) on the observation
at the attacked queries ($\epsilon{=}0.03$, step $0.001875$). We calibrate the iteration
budget the same way, by pushing it until it stops helping: undefended success on
\texttt{can} $h{=}2$, $q{=}2$ is $0.780$ at 50 steps, $0.560$ at 200 and $0.640$ at 500.
The attack saturates by 200 steps, which is the budget we use throughout, and we keep the
$10\times$ sweep as a robustness check.

Throughout, we report effectiveness as the \emph{fraction of attack-induced loss
recovered}, since a stronger attack enlarges the recoverable headroom and would otherwise
inflate raw success-rate differences.

\section{Median Temporal Ensembling}
\label{sec:method}

We replace temporal ensembling's exponentially weighted mean, $w_i \propto
\exp(-m\cdot\mathrm{age}_i)$, with the \textbf{coordinate-wise median} of the same $M$
candidates covering a control timestep, and change nothing else. Here
$\mathrm{age}_i$ is how many queries ago candidate $i$ was issued ($0$ for the most
recent of the $M$), $m$ sets how fast older predictions are discounted, and we keep the
deployed $m{=}0.01$. The substitution is a single line: no
retraining, no additional forward passes, and under $0.3\%$ of the cost of one
decision.

\subsection{The Bracket Guarantee}

One corrupted candidate suffices to move the mean without
bound: its finite-sample breakdown fraction is $1/M$ and its asymptotic breakdown point
is $0$. The coordinate-wise median
stays within the range of the clean candidates whenever fewer than half of them are
corrupted, the property that makes it a Byzantine-robust
aggregator~\cite{yin2018byzantine}. The guarantee is \emph{combinatorial}: it depends on how
many candidates are corrupted, not on how large the perturbation is.

That guarantee is already worst-case over the corrupted candidates. When
$2q(\tau)<M(\tau)$ the median's rank lies between $q{+}1$ and $M{-}q$. At most $q$
candidates are corrupted, so the $(q{+}1)$-th order statistic is at least the smallest clean
candidate and the $(M{-}q)$-th is at most the largest; the median is therefore confined to
the range of the clean candidates, whatever values the corrupted ones take. We call that
range the \emph{bracket}. No choice of
corrupted candidates, adaptive or otherwise, can move the output outside that range. An
attacker who knows the aggregator therefore has two levers: corrupt more candidates, the
$q$-axis we sweep, or exploit the freedom left \emph{inside} the bracket.

Our burst attacker optimises each corrupted query independently. The bracket argument holds
for arbitrary corrupted values, so the statistic-level claim is unaffected; the task-level
question is whether the freedom remaining \emph{inside} the clean bracket is enough to make
the task fail. An oracle that replaces every corrupted candidate by the value that moves the
median farthest does not make it fail: success stays at $0.975$ and $0.955$ on \texttt{can} at
$q{=}2,3$ and at $0.835$ and $0.840$ on \texttt{square} at $q{=}3,4$ ($n{=}200$ each).
It is the worst case over everything the bracket leaves free, excluding attacks that also
act where the bracket does not hold. How the attacker spends that
freedom matters: moving the median the same way at consecutive timesteps costs more than
moving it greedily in all four cells of Table~\ref{tab:oracle}, and significantly more in
two of them after Holm correction. No cell fails the task, and our headline attack does
not exploit the direction at all.
\begin{table}[t]
\centering
\caption{Oracle attacks confined to the bracket ($h{=}2$, $n{=}200$ paired episodes).
``inst.'' moves the median as far as the bracket allows at each timestep independently;
``drift'' spends the same freedom consistently over time. ``paired'' counts episodes won
only by inst.\ versus only by drift. A star marks the cells surviving Holm correction over
the four.}
\label{tab:oracle}
\footnotesize
\begin{tabular}{@{}llcccl@{}}
\toprule
task & $q$ & inst. & drift & paired & $p$ \\
\midrule
\texttt{can} & 2 & $0.975$ & $0.960$ & $6{:}3$ & 0.51 \\
\texttt{can} & 3 & $0.955$ & $0.880$ & $21{:}6$ & $5.9\!\times\!10^{-3}$* \\
\texttt{square} & 3 & $0.835$ & $0.650$ & $54{:}17$ & $1.3\!\times\!10^{-5}$* \\
\texttt{square} & 4 & $0.840$ & $0.760$ & $34{:}18$ & 0.036 \\
\bottomrule
\end{tabular}
\end{table}

\subsection{Weighting versus Statistic}

Because replacing the deployed aggregator changes two things at once, we separate them on the one cell for which we ran all six aggregators, holding the
candidate set and the episode seeds fixed (Table~\ref{tab:agg}). Uniform weights and exponential weights give the \emph{same}
success rate ($0.600$ both, paired $3{:}3$); every rank statistic gains at least $0.28$.
The gain therefore comes from the choice of statistic.

No two rank statistics separate in our data. The block medoid is the candidate closest to
the others; the trimmed medoid drops the $q$ most outlying candidates first. Comparing the
median against both at $q{\in}\{2,3\}$ on $n{=}200$ paired episodes, none of the six
pairwise comparisons survives Holm correction, and pooling the two grids leaves the
strongest at
$p{=}0.044$ against a $0.017$ threshold; at that sample size the medoids' point
estimates are higher by $0.035$ to $0.050$.

\begin{table}[tb]
\centering\footnotesize
\caption{Aggregators on the same candidate set and episode seeds (\texttt{can}, $h{=}2$, $q{=}2$, $n{=}50$). ``paired'' counts episodes won only by the deployed weighted mean versus only by the row.}
\label{tab:agg}
\begin{tabular}{lccc}
\toprule
aggregator & success & paired & $p$ \\
\midrule
latest prediction & 0.480 & $10{:}4$ & 0.18 \\
unweighted mean & 0.600 & $3{:}3$ & 1.00 \\
\textit{exp.-weighted mean (deployed)} & \textit{0.600} & --- & --- \\
coordinate-wise median (ours) & 0.880 & $2{:}16$ & \textbf{1.3$\times$\textbf{10}$^{-3}$} \\
block medoid & 0.940 & $2{:}19$ & \textbf{2.2$\times$\textbf{10}$^{-4}$} \\
trimmed medoid & 0.960 & $1{:}19$ & \textbf{4.0$\times$\textbf{10}$^{-5}$} \\
\bottomrule
\end{tabular}
\end{table}

\subsection{Measuring the Breakdown Point}

To see this directly, at every execution timestep we split the candidates by whether their
query was corrupted,
take the coordinate-wise range of the
\emph{clean} candidates, and record whether each aggregator's output falls outside it.
Among timesteps that contain at least one corrupted candidate
(Table~\ref{tab:mech}), %
the guaranteed regime covers $53\,704$ timesteps across two tasks
and four corruption levels, and the median leaves the clean range \textbf{exactly zero
times} (maximum normalised excess $0.0000$), while the mean leaves it in $81$--$93\%$ of
them. The threshold is sharp, and it is a property of the aggregator's output.

The executed action is one no query proposed: taking coordinates independently assembles a
vector no candidate offered. This is worth checking: an action no query
endorsed could be one the policy would never produce. Over the $82\,432$ corrupted timesteps of \texttt{can} at
$h{=}2$, $q{=}2$ the output coincides with a candidate in $0.03\%$ of them. With $M$ odd
each coordinate is some candidate's value but not the same one's, and with $M$ even the two
central order statistics are averaged.
The assembled action stays close to what it is assembled from: its distance to the nearest
candidate is $0.13$ of the mean pairwise distance between candidates (p99 $0.51$), and it
never leaves their coordinate-wise range.

\begin{table}[t]
\centering
\caption{How often each aggregator's output leaves the range of the clean candidates, and
by how far. ``excess'' is the median excursion beyond that range over corrupted
timesteps, in units of its width. Each entry spans \texttt{can} $q\in\{2,3,4,5\}$ and
\texttt{square} $q\in\{3,4\}$ at $h{=}2$.}
\label{tab:mech}
\footnotesize
\begin{tabular}{@{}lcccc@{}}
\toprule
 & \multicolumn{2}{c}{outside (\%)} & \multicolumn{2}{c}{excess} \\
\cmidrule(lr){2-3}\cmidrule(lr){4-5}
regime & median & mean & median & mean \\
\midrule
$2q < M$ \ (guaranteed) & \textbf{0.0} & 81--93 & \textbf{0.00} & 0.54--1.64 \\
$2q = M$ \ (tie)        & 71--84       & 92--99 & 0.24--0.87 & 1.88--2.22 \\
$2q > M$                & 87--96       & 98--99 & 2.4--15.3 & 4.4--42.4 \\
\bottomrule
\end{tabular}
\end{table}

\begin{figure*}[t]
\centering
\includegraphics[width=\textwidth]{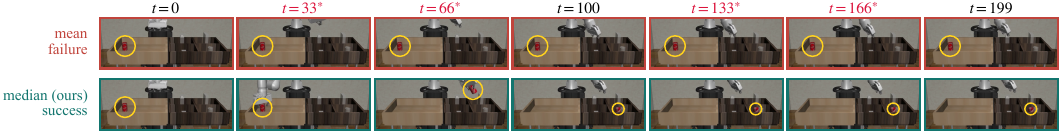}
\caption{One episode of \texttt{can} at $h{=}2$, $q{=}3$, run twice from the same seed and
under the same attack, changing only the aggregator: the weighted mean (top) fails and
the median (bottom) succeeds. A circle marks the can in each frame; columns
are equally spaced in time and a star marks a corrupted query window. Frames are a third-person render for
visualisation only; the policy sees two $84\times84$ views.}
\label{fig:qual}
\end{figure*}

The breakdown point explains \emph{why} a rank-based aggregator resists corruption that an
averaging one cannot, and correctly predicts that the advantage fades as corruption
spreads. The threshold is sharp for the statistic but not for the task: at
\texttt{can} $h{=}1$, $q{=}8$ every covering set is corrupted past the bracket
condition, and the median still gains $0.240$. Table~\ref{tab:mech} shows why: once
the guarantee lapses the median does leave the clean range, but its excursion stays a
fraction of the mean's. The breakdown point does \textbf{not}, however, predict the magnitude of the benefit, which differed
by a factor of two across configurations matched on the fraction of unprotected timesteps,
and was not monotone in $M$.

\subsection{Results}

\noindent\textbf{Setup.} Sparse burst corruption on the schedule and at the budget
calibrated in Section~\ref{sec:calib}. In the robomimic tasks a $7$-DoF arm must lift a
cube (\texttt{lift}), move a can into a bin (\texttt{can}) and seat a square nut on a peg
(\texttt{square}); PushT asks a circular end-effector to push a T-shaped block onto a
target pose. Both aggregators see the \emph{same} episode seeds, so every test is paired.

\begin{table*}[t]
\centering\footnotesize
\caption{Median vs.\ weighted-mean temporal ensembling on diffusion policies under sparse burst corruption. ``clean'' is the same-configuration no-attack baseline, mean $\to$ median. $p$ is raw; a star marks the cells surviving Holm correction over all 25. McNemar counts read mean-only\,:\,median-only successes; PushT reports coverage, tested by Wilcoxon. $n{=}50$ unless noted.}
\label{tab:main}
\begin{tabular}{llccccccl}
\toprule
env / task & bb & $h$ & clean & $q$ & mean & median & $\Delta$ & $p$ \\
\midrule
robomimic \texttt{can} & CNN & 1 & 0.98$\to$0.98 & 5 & 0.520 & 0.820 & +0.300 & \textbf{2.7$\times$\textbf{10}$^{-4}$}* \\
 & & &  & 7 & 0.220 & 0.480 & +0.260 & \textbf{7.2$\times$\textbf{10}$^{-3}$} \\
 & & &  & 8$^{n=100}$ & 0.120 & 0.360 & +0.240 & \textbf{4.8$\times$\textbf{10}$^{-5}$}* \\
\midrule
robomimic \texttt{can} & CNN & 2 & 1.00$\to$1.00 & 2 & 0.600 & 0.900 & +0.300 & \textbf{2.7$\times$\textbf{10}$^{-4}$}* \\
 & & &  & 3$^{n=200}$ & 0.335 & 0.675 & +0.340 & \textbf{9.2$\times$\textbf{10}$^{-13}$}* \\
 & & &  & 4 & 0.220 & 0.300 & +0.080 & 0.42 \\
 & & &  & 5 & 0.080 & 0.140 & +0.060 & 0.38 \\
\midrule
robomimic \texttt{can} & CNN & 3 & 1.00$\to$0.96 & 1 & 0.800 & 1.000 & +0.200 & \textbf{2.0$\times$\textbf{10}$^{-3}$}* \\
 & & &  & 2 & 0.300 & 0.580 & +0.280 & \textbf{1.3$\times$\textbf{10}$^{-3}$}* \\
 & & &  & 3 & 0.080 & 0.260 & +0.180 & \textbf{1.2$\times$\textbf{10}$^{-2}$} \\
\midrule
robomimic \texttt{can} & CNN & 4 & 1.00$\to$1.00 & 1 & 0.320 & 0.560 & +0.240 & \textbf{4.2$\times$\textbf{10}$^{-3}$}* \\
 & & &  & 2 & 0.020 & 0.080 & +0.060 & 0.25 \\
\midrule
robomimic \texttt{square} & CNN & 2 & 0.98$\to$0.98 & 2 & 0.860 & 0.900 & +0.040 & 0.62 \\
 & & &  & 3$^{n=100}$ & 0.670 & 0.840 & +0.170 & \textbf{2.1$\times$\textbf{10}$^{-3}$}* \\
 & & &  & 4$^{n=200}$ & 0.410 & 0.600 & +0.190 & \textbf{4.4$\times$\textbf{10}$^{-5}$}* \\
 & & &  & 5 & 0.380 & 0.380 & +0.000 & 1.00 \\
\midrule
robomimic \texttt{lift} & CNN & 2 & 1.00$\to$1.00 & 3 & 1.000 & 1.000 & +0.000 & 1.00 \\
 & & &  & 4 & 1.000 & 1.000 & +0.000 & 1.00 \\
\midrule
PushT & CNN & 2 & 0.79$\to$0.86 & 1 & 0.750 & 0.844 & +0.095 & \textbf{1.5$\times$\textbf{10}$^{-3}$}* \\
 & & &  & 2 & 0.643 & 0.842 & +0.199 & \textbf{8.2$\times$\textbf{10}$^{-7}$}* \\
 & & &  & 3 & 0.609 & 0.776 & +0.167 & \textbf{7.8$\times$\textbf{10}$^{-4}$}* \\
 & & &  & 4 & 0.515 & 0.689 & +0.174 & \textbf{1.5$\times$\textbf{10}$^{-4}$}* \\
\midrule
PushT & TF & 2 & 0.66$\to$0.68 & 1$^{n=200}$ & 0.553 & 0.623 & +0.069 & \textbf{1.3$\times$\textbf{10}$^{-4}$}* \\
 & & &  & 2$^{n=200}$ & 0.372 & 0.450 & +0.078 & \textbf{9.0$\times$\textbf{10}$^{-5}$}* \\
 & & &  & 3$^{n=200}$ & 0.284 & 0.323 & +0.039 & \textbf{1.6$\times$\textbf{10}$^{-2}$} \\
\midrule
\bottomrule
\end{tabular}
\end{table*}

Table~\ref{tab:main} shows the broad pattern: across 25 cells the median is \textbf{never
worse} than the mean and is significantly better in 18 at $p{<}0.05$, or \textbf{15} after
Holm correction.
Because rollouts are stochastic, repeating one $n{=}50$ cell on the same seeds shifted it
by $0.08$; we therefore made the sweep broad rather than deep.

Figure~\ref{fig:qual} shows one of the $78$ episodes in which the median succeeds and
the mean fails, drawn from the largest cell. The attack and the seed are identical; only the aggregator
changes. Across all $200$ paired episodes the median wins $78$ and the mean $10$, while
both succeed in $57$ and both fail in $55$.

PushT differs from robomimic in action dimensionality, observation structure and success
metric, which makes it our test of transfer. There the effect holds on both backbones at
every level tested: $+0.095$ to $+0.199$
with the CNN, $+0.039$ to $+0.078$ with the Transformer, which has the fewest candidates in
our sweep ($T_p{=}10$, hence $M\in\{4,5\}$). That is the least redundant configuration we
test, and the effect survives there too.

We next test blank camera frames (Section~\ref{sec:threat}), which corrupt
candidates with no adversary, no gradient and no optimisation behind them. The
median raises \texttt{can} from $0.510$ to $0.690$ at $q{=}2$ (paired $12{:}48$,
$p{=}3.2\!\times\!10^{-6}$) and from $0.120$ to $0.185$ at $q{=}3$ ($10{:}23$,
$p{=}0.035$). Both improvements survive Holm correction over the two densities; at $q{=}4$
both aggregators score $0.000$. %
The recovered fraction is $0.37$ at $q{=}2$ and $0.07$ at $q{=}3$, because a blinded policy
often proposes a wrong action that still lies inside the range of the clean candidates,
where no rank statistic can reject it. Under attack the recovered fraction
reaches $0.75$--$0.82$.

On clean data the cost is configuration-dependent: zero for \texttt{can} at $h\in\{1,2,4\}$,
for \texttt{square} and for \texttt{lift}; $-0.040$ for \texttt{can} at $h{=}3$; and
$+0.069$ / $+0.016$ on PushT. None of these three is significant ($p{=}0.48$, $0.46$,
$0.81$).
Some configurations leave nothing to recover: \texttt{lift} is immune to this threat at
every budget tested, with both aggregators succeeding on every episode.

The advantage survives a stronger attack, which is what separates the median from the
metric-type defence of Section~\ref{sec:metric}. We raise the PGD budget $10\times$, from
$50$ to $500$ steps, on \texttt{can} at $h{=}2$, $q{=}2$, and it holds at every budget
tested.

By the recovered-fraction measure of Section~\ref{sec:calib} the median is flat across the
sweep ($0.818$, $0.750$, $0.778$ at 50, 200 and 500 steps), and the paired advantage is
significant at each ($1{:}10$, $p{=}0.012$; $1{:}16$, $p{=}2.7\!\times\!10^{-4}$;
$3{:}17$, $p{=}2.6\!\times\!10^{-3}$). %
The mean is not monotone in budget, so we read this as the saturation of
Section~\ref{sec:calib}.

Finally, we repeat the substitution on an ACT policy trained on \texttt{can} (clean
success $0.85$ at $h{=}2$, against $1.00$ for the diffusion policy). Under the $200$-step
attack used everywhere else, both aggregators fall to $0.000$, so that budget cannot
distinguish them. We therefore
recalibrate it for this policy the same way as in Section~\ref{sec:calib}, lowering it
until the undefended policy still functions. At $20$ steps the median raises success from $0.010$ to $0.595$ at
$q{=}2$ (paired $0{:}117$) and from $0.000$ to $0.175$ at $q{=}3$ ($0{:}35$). Both gains
survive Holm correction over the three values of $q$ tested; at $q{=}4$ both aggregators
score $0.000$, and the mean wins no episode in any of the three. At $5$ steps the attack no longer moves the median at all: it reaches $0.880$ against its
own clean $0.850$ (paired $9{:}15$, $p{=}0.31$), while the undefended policy is already
down to $0.805$. The gain therefore
holds on a second policy class, over the budget range in which the undefended policy still
functions.

\section{A Metric-Type Defence: Encoder Fine-Tuning}
\label{sec:metric}

\noindent\textbf{The other kind of answer.}
If corruption enters through the visual encoder, the natural escape is to make the encoder
insensitive to it. We port the encoder adversarial fine-tuning proposed alongside EDPA~\cite{edpa} for
vision-language-action models to a diffusion policy: freeze the diffusion head, fine-tune
the observation encoder for 3000 steps at $10^{-5}$ on batches carrying the published patch
at a random position, with a consistency term tying the perturbed embedding to the clean
one. This is a
\emph{metric}-type defence. It certifies nothing about how many inputs are corrupted; it
aims to make the \emph{response} to a corrupted input small. Randomized
smoothing~\cite{cohen2019certified} is the certified member of this family; the patch
defences of Section~\ref{sec:related} are the combinatorial one, over image regions.

Evaluated with the released patch~\cite{dpattacker}
at $n{=}200$ per arm, fine-tuning recovers $44.4\%$ of the attack-induced loss on
\texttt{can} and $59.6\%$ on \texttt{square} %
(both $p{<}10^{-4}$, with error bars from independent binomials). The clean
cost is small but non-zero, $0.985 \to 0.970$ and $0.940 \to 0.935$. That much holds against the attack as
published. Retraining the patch
with sign-PGD and sweeping the step size upward from the value stated
in~\cite{dpattacker} takes the recovered fraction on \texttt{can} from $0.444$ to $0.099$ at
$\alpha{=}5\!\times\!10^{-4}$, ending at $0.073$, a $6.1\times$ loss; on \texttt{square} it
falls from $0.438$ to $0.258$ (Table~\ref{tab:metric}). The collapse coincides with the
attack reaching its own plateau: the undefended success rate falls from $0.185$ to $0.030$
over the same step and barely moves thereafter. The defence survives only where the attack
is under-optimised. Both strongest-attack rows stay
significant ($p{=}3.2\!\times\!10^{-3}$, $1.6\!\times\!10^{-2}$), so the effect collapses
without being eliminated.

\begin{table}[tb]
\centering\footnotesize
\caption{Encoder adversarial fine-tuning against attacks of increasing strength. All cells $n{=}200$ per arm. ``Recovered'' is $(\text{fine-tuned}-\text{undefended})/(\text{clean}-\text{undefended})$.}
\label{tab:metric}
\begin{tabular}{llccc}
\toprule
task & attack & undef. & fine-tuned & recovered \\
\midrule
\multirow{5}{*}{\texttt{can}} & released & 0.175 & 0.535 & \textbf{0.444} \\
 & sign-PGD $1\!\times\!10^{-4}$ & 0.185 & 0.540 & \textbf{0.444} \\
 & sign-PGD $5\!\times\!10^{-4}$ & 0.030 & 0.125 & \textbf{0.099} \\
 & sign-PGD $1.5\!\times\!10^{-3}$ & 0.030 & 0.100 & \textbf{0.073} \\
 & sign-PGD $5\!\times\!10^{-3}$ & 0.025 & 0.095 & \textbf{0.073} \\
\midrule
\multirow{3}{*}{\texttt{square}} & released & 0.680 & 0.835 & \textbf{0.596} \\
 & sign-PGD $1\!\times\!10^{-4}$ & 0.700 & 0.805 & \textbf{0.438} \\
 & sign-PGD $5\!\times\!10^{-3}$ & 0.475 & 0.595 & \textbf{0.258} \\
\bottomrule
\end{tabular}
\end{table}

This is not the obfuscated-gradient behaviour described by Athalye et
al.~\cite{athalye2018obfuscated}; the gradient is simply smaller. Probing the
encoder directly (same
observations, same patch, gradients through the original and the fine-tuned encoder), the
gradient \emph{direction} is preserved (cosine $0.637$ / $0.710$) while its magnitude falls
to $0.107$ / $0.211$ and the induced latent displacement falls in proportion ($0.149$ /
$0.284$). The fine-tuning did what it was meant to do, and that is why it fails: a
sign-based optimiser is scale-invariant, so a gradient ten times smaller still produces a
full-size step. Lowering sensitivity is not robustness, and more
adversarial training makes it worse: regenerating the patch against the current encoder at every step
recovers $15.6\%$ where the fixed version recovers $44.4\%$ ($n{=}200$,
$p{=}3.3\!\times\!10^{-6}$), and $3.6\%$ against the strong patch, no longer significant
($p{=}0.083$). %
The defence is also specific to the corruption it was trained against. Put on the same
burst axis as the median, against the same deployed aggregator and the same seeds, the
fine-tuned encoder recovers $0.091$, $0.134$ and $-0.111$ of the attack-induced loss at
$50$, $200$ and $500$ PGD steps; no cell is significant (smallest $p{=}0.14$) and the
sign changes with the budget. The median recovers $0.75$--$0.82$ over the same three
cells, significantly at each (Section~\ref{sec:method}). Making the encoder insensitive to
a fixed patch does not carry over to perturbations optimised per query.
What this family bounds is the \emph{response} to a perturbation, which a scale-invariant
optimiser rescales at no cost.

\section{The Common-Mode Blind Spot}
\label{sec:neither}

Both answers so far rest on the corrupted predictions disagreeing with the clean ones.
This section asks what is left when they do not.

\subsection{A Family of Consistency Statistics}

\begin{definition}[$\Delta$-family]
A statistic $S$ acting on the candidate set $A(\tau) = \{a^{(i)}(\tau)\}_{i \in C(\tau)}$
belongs to the \emph{$\Delta$-family} if it is invariant under a common translation of all
candidates: $S(\{a^{(i)}(\tau) + b\}) = S(\{a^{(i)}(\tau)\})$ for every $b \in \mathbb{R}^d$.
\end{definition}
Pairwise differences are the form most monitors take, but the distributional statistics
of~\cite{stac} need the invariance formulation.

In practice the published monitors all fall in this family. TIDE~\cite{rewindil} compares a fresh chunk with
the temporally ensembled plan, which is covered by Corollary~\ref{cor:agg} below.
STAC~\cite{stac} uses a distributional distance that is translation invariant. Falcon's
reuse gate~\cite{falcon} is a norm of a difference. ACC~\cite{vlafail} divides a difference
by an in-chunk range taken across \emph{different absolute timesteps}; it therefore belongs
to the family only when the bias is approximately constant over the overlap window.

\subsection{The Common-Mode Null Space}

\begin{theorem}
\label{thm:cm}
Suppose the corruption adds the \emph{same} bias $b(\tau) \in \mathbb{R}^d$ to every
covering candidate, $\tilde a^{(i)}(\tau) = a^{(i)}(\tau) + b(\tau)$ for all
$i \in C(\tau)$. Then $\tilde S(\tau) = S(\tau)$ for every $S$ in the $\Delta$-family,
regardless of $\|b(\tau)\|$.
\end{theorem}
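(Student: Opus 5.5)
The argument is a direct instantiation of the definition of the $\Delta$-family, so I will keep it short. The plan is to fix a timestep $\tau$ and write the corrupted candidate set as a common translate of the clean one, $\tilde A(\tau) = \{a^{(i)}(\tau) + b(\tau)\}_{i\in C(\tau)}$. The hypothesis says the \emph{same} vector $b(\tau)$ is added for every $i \in C(\tau)$. It also says nothing about $b$ varying with $i$, so the covering set $C(\tau)$ is unchanged: the corruption acts on observations, not on the query schedule, and $C(\tau)$ depends only on $t_i$, $T_p$, $T_o$. Taking $b := b(\tau)$ in the defining identity of the $\Delta$-family then gives $\tilde S(\tau) = S(\{a^{(i)}(\tau)+b(\tau)\}) = S(\{a^{(i)}(\tau)\}) = S(\tau)$.

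The independence from $\|b(\tau)\|$ follows because the invariance in the definition is required for \emph{every} $b \in \mathbb{R}^d$, with no size restriction. No continuity or limiting argument is needed. I would state this explicitly to contrast it with the metric guarantees of Section~\ref{sec:metric}.

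The only step needing care is the bookkeeping that the statistic is evaluated on the same index set with the same labelling of candidates. Some monitors use the candidate identities: age-weighted quantities, or a comparison of the freshest chunk with an aggregate. For those, I would note that a translation preserves the index $i$ attached to each candidate. The definition should therefore be read as acting on the indexed family, and the identity holds termwise.

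Where $S$ is defined on whole chunks spanning several absolute timesteps, as in the ACC remark, the bias must be common across those timesteps as well. That case lies outside the theorem and is handled by the paper's caveat, so I will state explicitly that the theorem is pointwise in $\tau$.

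The main obstacle is not mathematical but a matter of scope: making sure a monitor like TIDE, which compares against the temporally ensembled plan, is really of the form covered by the definition. I would defer that to Corollary~\ref{cor:agg}. The idea is that any translation-equivariant aggregator $g$, i.e. one with $g(\{a+b\}) = g(\{a\}) + b$, makes $a^{(i)} - g(A)$ translation invariant. The weighted mean and the coordinate-wise median are both such aggregators.
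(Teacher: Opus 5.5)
Your proof is correct and matches the paper's argument: a bias $b(\tau)$ common to all candidates is a common translation of the candidate set, so the $\Delta$-family's defining invariance, applied with $b = b(\tau)$, gives $\tilde S(\tau) = S(\tau)$ for any $\|b(\tau)\|$. Your extra remarks (the index set $C(\tau)$ is unchanged, the theorem is pointwise in $\tau$, and TIDE is handled by Corollary~\ref{cor:agg}) agree with the paper but are not needed for the theorem itself.
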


\begin{proof}
Such a corruption is exactly a common translation of the candidate set, and Definition~1 is
invariance under common translation.
\end{proof}

\begin{corollary}
\label{cor:agg}
If an aggregator $g$ satisfies $g(\{x_i + b\}) = g(\{x_i\}) + b$, true of
the mean, the median, the medoid and any quantile, then ``candidate minus aggregate'' is
also in the $\Delta$-family. Comparing a fresh chunk against a temporally ensembled plan
therefore inherits the blind spot.
\end{corollary}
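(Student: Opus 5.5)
The plan is to show directly that the residual statistic satisfies Definition~1, and then apply Theorem~\ref{thm:cm}. First I fix what ``candidate minus aggregate'' means. For each $j \in C(\tau)$ it is
\[
  r_j(\{x_i\}) = x_j - g(\{x_i\}_{i\in C(\tau)}).
\]
I treat the whole indexed family $\{r_j\}_{j\in C(\tau)}$ as one statistic of the candidate set. Any downstream score built from it, such as a norm, a maximum or a thresholded count, is a function of this family, so it inherits the invariance automatically.

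The key step is a one-line computation using the hypothesised equivariance $g(\{x_i+b\}) = g(\{x_i\})+b$:
\[
  r_j(\{x_i+b\}) = (x_j+b) - g(\{x_i+b\}) = x_j + b - g(\{x_i\}) - b = r_j(\{x_i\}),
\]
for every $b\in\mathbb{R}^d$. This is exactly Definition~1, so $r$ lies in the $\Delta$-family. Theorem~\ref{thm:cm} then gives that a common-mode bias $b(\tau)$ leaves every such residual unchanged, whatever $\|b(\tau)\|$ is.

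Next I check the parenthetical claim for each aggregator named:
\begin{itemize}
\item The (weighted) mean is equivariant because its weights sum to one. For temporal ensembling the weights $w_i\propto\exp(-m\cdot\mathrm{age}_i)$ depend only on ages, not on values, so they are unchanged by the translation.
\item Any coordinate-wise quantile, including the median, is equivariant because adding $b_k$ to every entry of coordinate $k$ preserves the order and shifts each order statistic by $b_k$. Averaging the two central order statistics when $M$ is even preserves this.
\item The medoid is equivariant because pairwise distances are translation invariant. The minimiser's index is therefore unchanged, and the selected candidate is shifted by $b$.
\end{itemize}

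The TIDE-style comparison needs the most care, because there the fresh chunk's entry at $\tau$ is itself a candidate $a^{(j)}(\tau)$, $j\in C(\tau)$, and the plan $g$ is built from the covering set. I would make explicit that the fresh query's entry belongs to the covering set and is shifted by the same $b(\tau)$, which is exactly the common-mode hypothesis. If instead the plan at $\tau$ excludes the fresh chunk, the same algebra still works, provided every candidate entering the plan is shifted by the same $b(\tau)$. This identification is the main point to get right; once it is in place, the rest is the computation above.
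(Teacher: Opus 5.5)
Your proposal is correct and follows the paper's route; the paper leaves the proof implicit. Equivariance of $g$ makes the shift cancel in $x_j - g(\{x_i\})$, so the residual satisfies Definition~1 and Theorem~\ref{thm:cm} applies. Your checks of the parenthetical equivariance claims (age-only normalised weights, order preservation per coordinate, translation-invariant distances for the medoid), and your observation that the fresh chunk's entry at $\tau$ is itself shifted by the common $b(\tau)$, fill in details the paper only asserts.
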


\begin{corollary}
\label{cor:out}
The same equivariance implies that such an aggregator's \emph{output} is shifted by exactly
$b(\tau)$. Common-mode corruption is thus neither measurable by $\Delta$-family statistics
nor removable by equivariant aggregation; robust aggregation can only recover the
non-common-mode component.
\end{corollary}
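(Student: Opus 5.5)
The statement to prove is Corollary~\ref{cor:out}, and the plan is a direct substitution. I apply the equivariance hypothesis of Corollary~\ref{cor:agg} to the corrupted candidate set. Under common-mode corruption, $\tilde a^{(i)}(\tau) = a^{(i)}(\tau) + b(\tau)$ for every $i \in C(\tau)$. Equivariance then gives
\[
  g(\{\tilde a^{(i)}(\tau)\}) = g(\{a^{(i)}(\tau) + b(\tau)\}) = g(\{a^{(i)}(\tau)\}) + b(\tau),
\]
so the executed action is displaced by exactly $b(\tau)$, whatever the magnitude of $\|b(\tau)\|$. For the coordinate-wise median, the mean and any quantile, I will check equivariance in one line each. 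Adding $b$ to every point preserves the order within each coordinate, so each order statistic shifts by the corresponding coordinate of $b$. Weighted means shift because the weights sum to one. The medoid is unchanged as an index, since all pairwise distances are translation invariant, so the selected candidate shifts by $b$.

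\emph{Non-measurability.} This part is Theorem~\ref{thm:cm} applied verbatim: every $\Delta$-family statistic takes the same value on the clean and the corrupted candidate sets. Corollary~\ref{cor:agg} extends this to residuals of the form ``candidate minus aggregate'', since $(a^{(i)}+b) - (g + b) = a^{(i)} - g$.

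\emph{Non-removability.} Here I make precise that no equivariant $g$ can undo the shift. The argument is indistinguishability. Take two scenarios: a clean set $\{x_i\}$ with truth near $g(\{x_i\})$, and a clean set $\{x_i + b\}$ that is genuinely what the policy should do. The corrupted version of the first coincides with the clean version of the second. Any aggregator, equivariant or not, therefore returns the same output on both, and so incurs error at least $\|b\|/2$ in one of them. For an equivariant $g$ the error is exactly $\|b\|$ relative to the clean output in the first scenario.

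\emph{The non-common-mode component.} I decompose a general corruption as $\tilde a^{(i)} = a^{(i)} + b + e^{(i)}$, where $b$ is a chosen common part (for example the coordinate-wise median of the perturbations) and $e^{(i)}$ is the residual. By equivariance, $g(\{\tilde a^{(i)}\}) = b + g(\{a^{(i)} + e^{(i)}\})$. The bracket guarantee of Section~\ref{sec:method} then applies only to the second term, that is, only when fewer than half of the $e^{(i)}$ are nonzero.

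The main obstacle is this last claim. The phrase ``can only recover the non-common-mode component'' is informal, and the decomposition into $b$ and $e^{(i)}$ is not unique. I would state it as the equality above, taking $b$ as any fixed common vector, rather than claim a canonical decomposition. The rest is routine.
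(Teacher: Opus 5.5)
Your proposal is correct and matches the paper, which justifies the corollary inline in the same way: substitute $\tilde a^{(i)}(\tau)=a^{(i)}(\tau)+b(\tau)$ into the equivariance $g(\{x_i+b\})=g(\{x_i\})+b$ to get an output shifted by exactly $b(\tau)$, and cite Theorem~\ref{thm:cm} (with Corollary~\ref{cor:agg} for residuals) for non-measurability. Your indistinguishability argument and the decomposition $g(\{\tilde a^{(i)}\})=b+g(\{a^{(i)}+e^{(i)}\})$ go beyond what the paper states and are sound, with two caveats: the claim for non-equivariant aggregators needs $\{x_i+b\}$ to be an admissible clean candidate set and needs the aggregator to see only the current candidates; and ``fewer than half of the $e^{(i)}$ nonzero'' is a sufficient condition for the bracket guarantee, not a necessary one, since for the coordinate-wise median it can be checked coordinate by coordinate.
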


The premise is an \emph{exact} common bias on every candidate; statistics comparing
candidates against an external reference fall outside Definition~1 by construction.

\subsection{What the Blind Spot Costs}

Our faithful re-implementations of TIDE~\cite{rewindil} and ACC~\cite{vlafail} do
separate attacked from clean episodes well, with AUROC $0.85$--$0.96$ on \texttt{can}. The problem appears at the prescribed operating
point. With thresholds calibrated on $500$ clean episodes according to each paper's rule,
TIDE never fires and ACC fires on $2\%$ of episodes after the physical patch has driven
the policy's success rate to $0.04$. The monitors rank
episodes well, yet stay silent through a total task failure.

\noindent\textbf{Where the boundary lies.}
To locate that boundary we first make the burst attack dense, corrupting every query in
the window. Detection does not fall away. Against the $92\%$ and $96\%$ that TIDE and ACC reach at
$q{=}2$ on \texttt{can}, the dense burst leaves them at $98\%$ / $98\%$ there and at
$30\%$ / $92\%$ on \texttt{square},
while the policy's success rate falls to $0.00$ and $0.18$ respectively. This attack is
persistent but it is not a \emph{common translation}, which is what Theorem~\ref{thm:cm}
requires. Untargeted PGD optimises each query independently, producing candidate
displacements that are strongly \emph{aligned} (median pairwise cosine $0.712$) but
unequal in magnitude, so the candidates still disagree and the $\Delta$-family still sees
them.
A physical patch approximates the equality the theorem requires. Fixed in the scene, it
looks much the same to every query that sees it, and so enters every overlapping
prediction in almost identical form. %
Corollary~\ref{cor:out} then says a rank aggregator cannot
help there, and it does not: under the released patch both aggregators score $0.19$, and
over $200$ paired episodes only three differ ($1{:}2$, $p{=}1.00$). That
is why the monitors stayed silent under the patch yet kept firing under the dense burst:
what separates the two cases is whether the corruption left the candidates disagreeing,
and the policy was driven to failure in both.

\section{Limitations}
\label{sec:limits}

\textbf{Scope.} All results are in simulation (robomimic~\cite{robomimic}: \texttt{lift},
\texttt{can}, \texttt{square}; and PushT); none is on hardware. The physical patch is
rendered into the observation rather than printed, and the sensor faults of
Section~\ref{sec:threat} are injected rather than measured on real optics. We run no
lighting, viewpoint or material study, and the training-time patch augmentation is not
identical to the attack's own. On a real system the stride $h$ is bounded by inference
latency and need not stay constant. Ours is fixed and known: the sweep covers
$h\in\{1,\dots,4\}$, and hence $M$ from $15$ down to $3$, but never a stride that varies
within an episode.
We test two policy families, Diffusion Policy and ACT. The mechanism needs only that a
timestep be covered by several predictions that are then combined, and it does not apply
where a query emits a
single action or discards the overlap instead of aggregating it.
We test one defence instantiation per family.

\textbf{The threat model.} Sparse corruption is an assumption about the redundancy, argued
from the intermittent \emph{pattern} such faults produce and from intermittent access. The
fault that separates the aggregators, blank frames, we evaluate on one task and one
stride.

\section{Conclusion}
\label{sec:conclusion}

Action chunking already supplies the redundancy that robust aggregation needs, and the
deployed temporal-ensembling rule has breakdown point $0$. The coordinate-wise median is a one-line
replacement with a \emph{combinatorial} guarantee, verified at the level of the statistic
and stable across a tenfold increase in attack optimisation effort, where a metric-oriented
defence loses most of its effect. We also delimit where it cannot help. Corruption that
shifts every covering candidate equally is invisible to the whole $\Delta$-family, and no
equivariant aggregator can remove it.
An oracle attacker confined to that bracket does not make the task fail, though it does
more damage when it pushes the aggregate the same way at consecutive timesteps.

\section*{ACKNOWLEDGMENT}
A large language model was used in producing this work: for code, for the design of the
figures and tables, and for drafting and polishing the text. The author reviewed and
edited all of it and is responsible for the content.

\bibliographystyle{IEEEtran}
\bibliography{refs}

\end{document}